\newcommand{\comment}[1]{}
\newcounter{remark}
\newenvironment{remark}[1][Remark]{\refstepcounter{remark}\begin{trivlist}
\item[\hskip \labelsep {\bfseries #1 \theremark.}]}{\end{trivlist}}
\newcounter{example}
\newenvironment{example}[1][Example]{\refstepcounter{example}\begin{trivlist}
\item[\hskip \labelsep {\bfseries #1 \theexample.}]}{\end{trivlist}}
\theoremstyle{definition}
\newtheorem{theorem}{Theorem}
\newtheorem{lemma}[theorem]{Lemma}
\def \h {\mathbf h}
\def \w {\mathbf{w}}
\begin{document}

%

\author{Derek~T.~Anderson, Matthew~Deardorff, Timothy~C.~Havens, Siva~K.~Kakula, Timothy~Wilkin, Muhammad~Aminul~Islam, Anthony~J.~Pinar,
Andrew~Buck}

\title{Fuzzy Integral = Contextual Linear Order Statistic
\thanks{D.T.~Anderson, M. Deardorff, and A. Buck are with the Department of Electrical Engineering and Computer Science at University of Missouri, Columbia, MO 65211, USA (e-mail: andersondt@missouri.edu, msdrm8@mail.missouri.edu, buckar@missouri.edu)}
\thanks{M.~Islam is with the Department of Electrical \& Computer Engineering and Computer Science, University of New Haven, Connecticut, CT 06516, USA. (e-mail: amin\_b99@yahoo.com)}
\thanks{A.J.~Pinar is with the Department of Electrical and Computer Engineering, Michigan Technological University, Houghton, MI 49931, USA (e-mail: ajpinar@mtu.edu)}
\thanks{T.C.~Havens and S.K.~Kakula are with the Department of Computer Science, Michigan Technological University, Houghton, MI 49931, USA (e-mail: thavens@mtu.edu, skakula@mtu.edu)}
\thanks{T. Wilkin is with the School of Information Technology, Deakin University, Waurn Ponds, Victoria, 3216, Australia (email: tim.wilkin@deakin.edu.au)}}

\markboth{}%
{Islam \MakeLowercase{\textit{et al.}}:}

\maketitle

\begin{abstract}
The fuzzy integral is a powerful parametric nonlinear function with utility in a wide range of applications, from information fusion to classification, regression, decision making, interpolation, metrics, morphology, and beyond. While the fuzzy integral is in general a nonlinear operator, herein we show that it can be represented by a set of contextual linear order statistics (LOS). These operators can be obtained via sampling the fuzzy measure and clustering is used to produce a partitioning of the underlying space of linear convex sums. Benefits of our approach include scalability, improved integral/measure acquisition, generalizability, and explainable/interpretable models. Our methods are both demonstrated on controlled synthetic experiments, and also analyzed and validated with real-world benchmark data sets.
\end{abstract}

\begin{IEEEkeywords}
Fuzzy integral, fuzzy measure, linear order statistic, decomposition, clustering, VAT, iVAT
\end{IEEEkeywords}

\IEEEpeerreviewmaketitle

\section{Introduction}\label{sec:intro}\vspace{5pt}
\IEEEPARstart{T}{he} \emph{fuzzy integral} (FI) is a parametric nonlinear function with utility in calculus \citep{SugenoRef,SugenoPHD,6918497,SUGENO20131}, inference \citep{grabisch2013fundamentals,Leary}, aggregation \citep{keller1994advances,6722924,MelissaFuzzy,ANDERSON201624}, classification \citep{grabisch1994classification,enhanced_fusion_of_deep}, regression \citep{8858835,eyke,Grabisch2003,ANGILELLA2010277}, decision making \citep{grabisch1992multi,grabisch1996application}, morphology \citep{409969}, interpolation \citep{Grabisch2004TheCI}, metrics \citep{4505367}, and beyond. As the FI has its roots in Calculus it should be no surprise that it can be used to achieve a range of feats in the continuous and discrete universes. The FI makes use of a \emph{fuzzy measure} (FM) \citep{SugenoPHD}; a \emph{type} of capacity. For $N$ inputs, the FM has $2^N$ variables, one for each possible subset of inputs. A number of challenges exist relative to the FI/FM\footnote{We will use the notation FI versus FI/FM hereafter unless there is a specific reason we wish to differentiate between these concepts}. 

\begin{enumerate}
    \item \textbf{(C1) Tractability}: \emph{How do we scale the FI to ``Big N''?} \\ The FM is not naturally scalable, i.e., there are $1,024$ variables for $N=10$ inputs, $2^{N=20}=1,048,576$, and $2^{N=100}$ is nonillion variables. In practice, this impacts factors like data storage, computation, and the compleixty of algorithms that learn the FI from data. 
    \item \textbf{(C2): Acquisition}: \emph{Where do we get the FI from?} \\ Was the FI provided by a human, learned from data (see Section \ref{sec:learn}), knowledge about attributes/criteria \citep{MARICHAL2000641}, differentiation \citep{SUGENO20131}, imputed (e.g., $\lambda$-FM \citep{SugenoPHD}), etc.?
    \item \textbf{C3: Uncertainty}: \emph{Was the FI fully observed?}\\ Did an expert provide the full FM? If the FI was learned, what is the degree of observability for variables \citep{islam2017data}? Furthermore, how is this knowledge used (if at all)?
    \item \textbf{C4: Understandability}: \emph{Why? How? What?} \\ Example questions include, how do we explain a FM and how did the FI combine the data? Methods range from FM indices (e.g., Shapley \citep{shapley_original,grabisch2004axiomatization} and interaction index \citep{murofushi1993techniques}) to data centric indicies \citep{xai_wcci,xai_tetci}, visualization \citep{8015533,Buck}, linguistic summarization \citep{MurrayLing}, trust \citep{xai_wcci,xai_tetci}, operator analysis \citep{PriceOper}, and beyond. 
\end{enumerate}

\begin{figure*}[th]
\centering
\includegraphics[width=0.9\linewidth]{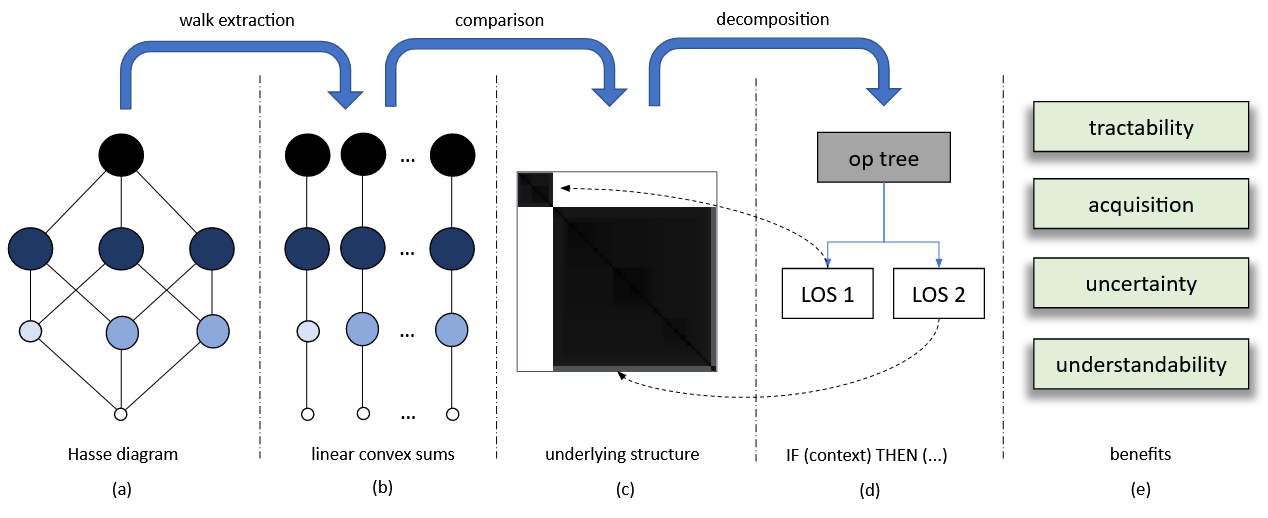}
\caption{Illustration of flow from (a) fuzzy measure to its underlying (b) linear convex sums (LCSs) and (d) set of (approximating) linear order statistics (LOSs). Step (c) is similarity between measures and (d) is clustering for discovery via decomposition.}\label{fig:bigpic}
\end{figure*}

The above list is not comprehensive and it does not focus on topics like integral/measure extensions nor applications. Herein, we show that the FI can be decomposed via a sampling scheme into a set of \emph{linear convex sums} (LCSs), followed by a reduced set of contextual \emph{linear order statistics} (LOS) \citep{yager1988ordered,yager1992applications}. We provide an algorithm to visualize and optionally extract this underlying structure relative to different applications. Figure \ref{fig:bigpic} illustrates the steps in our proposed framework.

As a result of this decomposition, it is possible to seek improved learning algorithms that target the decomposed observable sub-operator structure (aka advancement of C2). We can also obtain a deeper understanding---namely in application but also theoretically---of a particular instance of the FI and its underlying application domain (C4). This capability is a selling point for why use the FI. Our representation also enables mitigation---either decision suppression or imputation (intelligent guessing)---based on how well ``parts'' of a FM have been approximated relative to new instances/samples at \emph{runtime} (C3). Last, this article identifies a tradeoff space in which one can intelligently decide how to best allocate limited resources in the pursuit of scaling the FI to ``bigger $N$'' (C1).

\section{Linear Convex Sum and Linear Order Statistic}\label{sec:los}

In this subsection we quickly review the LCS and LOS. Let $X = \{x_1,...,x_N\}$ be $N$ \emph{sources}, e.g., humans, algorithms, or sensors. Furthermore, let $h(x_i)$ be the ``input'' from source $i$, e.g., subjective belief, probability, objective sensor measurement, etc. For simplicity, let $\h=(h_1,...,h_N)^t$ be a vector of $N$ inputs\footnote{For the remainder of this article, we use the shorthand notation $h_i=h(x_i)$}. The LCS is  

\begin{equation}
f^1_{\w}(\h) = \sum_{i=1}^{N}{ w_i h_i },
\label{eq:lcs}
\end{equation}
\noindent where $\w=(w_1,...,w_N)^t \geq \mathbf{0}^N$ and $\left(\sum_{i=1}^{N}w_i\right)=1$.

The LOS is
\begin{equation}
f^2_{\w}(\h) = \sum_{i=1}^{N}{ w_i h_{\pi{(i)}} },
\label{eq:lcos}
\end{equation}
\noindent where $\pi$ is a sorting such that $h_{\pi(1)} \geq h_{\pi(2)} ... \geq h_{\pi(N)}$. Thus, a LOS is a LCS with a ``pre-sort'' when $\w=(w_1,...,w_N)^t \geq \mathbf{0}^N$ and $\left(\sum_{i=1}^{N}w_i\right)=1$.

\begin{remark}
\textbf{(Familiar Operators)} The LOS generates a wealth of common operators, from t-norms (intersection like) to expected value and t-conorm (union like) operators. For example, $\w=(1,0,...,0)^t$ turns the LOS into the maximum, $\w=(0,...,0,1)^t$ is the minimum, $\w=(\frac{1}{N},...,\frac{1}{N})^t$ is the mean, $\w=(0,...,0,1,0,...0)^t$ is the median, etc. There are numerous ``soft'' and ``trimmed'' versions of these operators. The point is, the LOS produces a wealth of functions outside the \emph{functional scope} of the LCS.
\end{remark}

\section{Fuzzy Measure and Fuzzy Integral}\label{sec:fifm}


The FM, $g:2^X \rightarrow \mathbb{R}^+$, is a function with the following two properties; (i) (boundary condition) $g(\emptyset) = 0$, and (ii) (monotonicity) if $A,B \subseteq X$, and $A \subseteq B$, then $g(A) \le g(B)$\footnote{Sometimes a normality condition is imposed such that $g(X)=1$.}.


The reader can refer to \citep{ANDERSON201624} for a recent survey of different fuzzy integrals from the Sugeno integral \citep{SugenoPHD} to the Sugeno \emph{Choquet integral} (ChI) \citep{MUROFUSHI1991532} and other more recent and exotic extensions (non-direct FI \citep{MelissaFuzzy,ANDERSON201624,6722924}, non-convex and sub-normal FI \citep{ANDERSON201624,6722924}, shape preserving FI \citep{8491555}, k-additive FI \citep{GRABISCH1997167}, symmetric and asymmetric FI \citep{grabsym}, bipolar FIs \citep{GRECO201321}, etc.). The ChI of $\h$ on finite $X$ is 
\begin{align}
\int{\h \circ g} = C_g(\h) = \sum_{j=1}^N h_{\pi(j)} (g(A_{\pi(j)}) - g(A_{\pi(j-1)})),
\end{align}\label{eq:ChI}       
\noindent for $A_{\pi(j)} = \{ x_{\pi(1)},$  $\dots,$ $ x_{\pi(j)}\}$, $g(A_{\pi(0)})=0$, and $\pi$ (sort).

\begin{remark}
\textbf{($\text{LOS} \subset \text{ChI}$)} The LOS is a special case of the ChI; i.e., when $g(A) = g(B)$ for $|A|=|B|$, $A,B \in 2^X$.
\end{remark}

\subsection{Hasse Diagram and Walks}\label{sec:walk}
The FM/capacity and ChI can be visualized as a Hasse diagram; induced by the monotonic inequality constraints. For example, $g(A) \geq \max_{x_k \in A}{g(A \setminus x_k)}, A \subseteq X \setminus \emptyset$. ``Nodes'' (or vertices) in the graph are subsets of $X$ and ``edges'' indicate set exclusion. Relative to a sort of the data ($\pi_i$)---of which there are $N!$ possible sorts---the ChI is
\begin{align}
\sum_{j=1}^N h_{\pi_i(j)} (g(A_{\pi_i(j)}) - g(A_{\pi_i(j-1)})) = \h_{\pi_i}^t\w_{\pi_i},
\end{align}\label{eq:ChIRewrite}  
\noindent where $w_{\pi_i}(j)=(g(A_{\pi_i(j)}) - g(A_{\pi_i(j-1)}))$. It is trivial to prove that $\sum_{j=1}^{N}w_{\pi_i}(j)=1$ when $g(\emptyset)=0$, $g(X)=1$, and $g(A) \geq \max_{x_k \in A}{g(A \setminus x_k)}$. Herein, we refer to $\pi_i$ as a ``walk'' (in the Hasse diagram) since $\w_{\pi_i}$ uses exactly one variable at each ``level'' in the diagram\footnote{In prior work, researchers have refered to these set of weights as a probability distribution given the boundary conditions, positivity, and additive properties. Herein, we highlight this relationship but are careful to not draw any conclusions, e.g., is a walk semantically connected to a random process?}. That is, there is one variable for the empty set, one density, one tuple, etc. Thus, the ChI is simply $N!$ LCSs, one for each possible sort. The key is, the ChI is a reduction from $N! \times N$ LCS variables to $2^N$ FM variables. They are different embeddings. 

\begin{example}\label{ex:chiexp}
For $N=2$, the ChI can be expanded as
\begin{align}
C_g(\h) = \left\{
  \begin{array}{lr}
    h_1 w_1 + h_2 w_2 & : h_1 \geq h_2 \\
    h_2 w_3 + h_1 w_4 & : h_2 > h_1
  \end{array}
\right.
\end{align}
where $w_1 = g(\{x_1\})$, $w_2 = 1 - g(\{x_1\})$, $w_3 = g(\{x_2\})$, $w_4 = 1 - g(\{x_2\})$. Thus, there are four weights but just two underlying \emph{free} FM variables; the densities.
\end{example}

A similar story holds for the LOS, it is a reduction from $N! \times N$ variables to $N$.

\begin{example}\label{ex:lcosexp}
For $N=2$, the LOS can be expanded as
\begin{align}
f^2_{\w}(\h) = \left\{
  \begin{array}{lr}
    h_1 w_1 + h_2 w_2 & : h_1 \geq h_2 \\
    h_2 w_3 + h_1 w_4 & : h_2 > h_1
  \end{array}
\right.
\end{align}
where $w_1 = w_3$ and $w_2 = w_4$. Thus, there are four weights but just two underlying \emph{free} LOS variables.
\end{example}

Last, the LCS and LOS both have $N$ weights, but the LOS sorts the data, inducing the nonlinearity.

\begin{example}\label{ex:lcsexp}
For $N=2$, the LCS can be expanded as
\begin{align}
f^1_{\w}(\h) = \left\{
  \begin{array}{lr}
    h_1 w_1 + h_2 w_2 & : h_1 \geq h_2 \\
    h_2 w_3 + h_1 w_4 & : h_2 > h_1
  \end{array}
\right.
\end{align}
where $w_1 = w_4$ and $w_2 = w_3$. 
\end{example}

In summary, our message is that one can view a class of aggregation operators as mappings from the space of $N! \times N$ weights to a more manageable space, e.g., $2^N$ for the ChI and $N$ for the LCOS and LCS. These observations will help us later with determining how to decompose the FI.

\subsection{Data-Driven Learning Algorithms}\label{sec:learn}

Many methods have been proposed to learn the FI/FM from data. Examples include linear programming \citep{beliakov2009construction}, quadratic programming \citep{grabisch2013fundamentals,8858835,islam2017data,pinar2017measures,andersonbinary,Anderson2014}, evolutionary algorithms \citep{muhammad_choquet_evo}, gradient descent \citep{muhammad_chimp,mendez2008minimum}, Gibbs sampling \citep{mendez2007sparsity}, GOAL programming \citep{7284140}, regularization \citep{pinar2017measures,siva1,Anderson2014}, and reward and punishment \citep{KELLER19961}, multiple instance learning \citep{9002801,7743905}, binary capacities \citep{9002801,andersonbinary}, multi-kernel learning \citep{7337934,7762088}, and more. Last, methods can be divided into those that learn the full measure or a subset (e.g., the densities) coupled with imputation (Sugeno $\lambda$ FM, S-Decomposable FM, etc.). An aim of our current article is to reveal relevant underlying structure to facilitate new scalable data-driven optimizations. 

\subsection{Data Supported FM Variables}\label{sec:support}

In \citep{islam2017data}, we showed that data driven learning of the FI results in missing variables (FM variables not ``seen'' during learning), an interval-valued FM (a direct result of unobserved variables and the FM constraints), and an interval-valued FI result (a result of an interval-valued FM). For example, let $N=3$ and $h_3 > h_1 > h_2$. The FM variables encountered are $g(\{h_3\})$, $g(\{h_1, h_3\})$ and $g(X)$. If we consider all training data then it is trivial to track all data supported variables; those that have at least been observed once. This is a simple binary check for inclusion/exclusion. The reader can imagine extending this to a degree of support (something not yet formally characterized). 

\begin{figure}[tbh]
\centering
\includegraphics[width=0.7\linewidth]{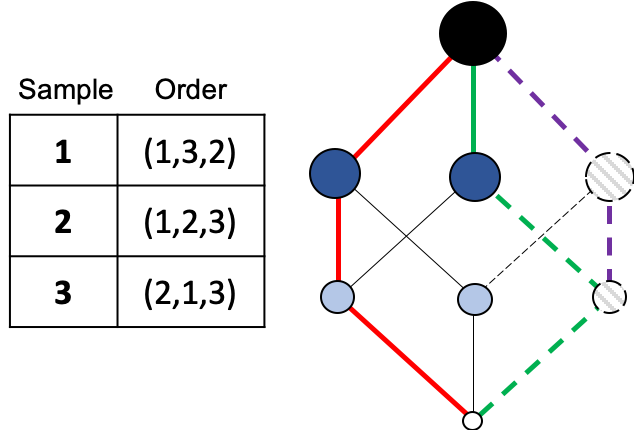}
\caption{Example FM with observed walks reported (table), one observed walk (LCS) highlighted (red), one unobserved walk highlighted (purple), and one unobserved walk shown which has a subset of variables observed and unobserved (green). Dotted lines and dashed variables denote unobserved.}\label{fig:ivatclodd}
\end{figure}

\subsection{Interpretable and/or Explainable FM and FI}\label{sec:xai}

A long standing goal is understanding the measure and integral, both in theory and in practice. Numerous works have appeared, which we organize according to what \emph{parts} they use; (i) the integral, (ii) the FM, and/or (iii) data. Well known examples of (ii) include the Shapley and interaction index; what is the ``worth'' of the individual inputs and how are these inputs ``interacting''. In \citep{xai_tetci}, we extended these indices (which assume total observability) to data centric problems, which are partially observable domains. Another topic is how is the data being combined? In \citep{YAGER1993125}, Yager proposed ORNESS and ANDNESS to measure the degree to which an OWA operator is similar to a disjunction or conjunction. In \citep{price2014indices}, we proposed indices to measure the similarity of a FI to operators like the minimum, maximum, the mean, and an OWA in general. In \citep{Torra2013HellingerDF}, Torra et al. proposed a Hellinger distance-based measure between FMs. Last, there is the question of if an FI was learned from data, what \emph{characteristics} does the data have and how does that impact us? In \citep{xai_wcci}, we proposed data-centric indices that inform what parts (walks and variables) of the FM were learned, their sampling statistics, and a measure of data support-based trust was proposed. Again, this list is not comprehensive but it gives the reader a feel for existing work. 

\section{Decomposition}\label{sec:decomp}

The goal of this section is to decompose the ChI into a set of manageable ``parts'' (LCSs) to improve our understanding, optimization, and discovery of a set of underlying exact or approximate LOS operators. First, we can write the ChI as
\begin{equation}
  C_g(\h) =
  \begin{cases}
    \w_{\pi_1}^t \h & \text{if $\pi^* = \pi_1$} \\
    ... &  \\
    \w_{\pi_k}^t \h & \text{if $\pi^* = \pi_k$} \\
    ... &  \\    
    \w_{\pi_{N!}}^t \h & \text{if $\pi^* = \pi_{N!}$,} \\
  \end{cases}
\end{equation}\label{eq:decomp}
\noindent where $\pi^*$ is the sort for $\h$ (a new instance) and $k$ denotes the $k$th lexicographically encoded sort, e.g., for $N=3$, $\w_{\pi_1}$ is $h_{1} \geq h_{2} \geq h_{3}$, $\w_{\pi_2}$ is $h_{1} \geq h_{3} \geq h_{2}$, etc. $C_g(\h)$ is simply $N!$ LCSs with $2^N$ underlying shared variables (see Lemma \ref{lem:unique}).

\begin{remark}
Equation \ref{eq:decomp} can be interpreted a number of ways. One take is that a (discrete/finite) ChI is simply a collection ($N!$) of context driven logics; where context refers to the input \emph{strength} of our sources and logic is a source specific weighting. For example, ``if source two is more confident than source five which is more confident than ... then take $w_1$ amount of source two's input, $w_2$ of source five, etc.'' 
\end{remark}

\begin{lemma}\label{lem:unique}
A ChI with $2^N$ FM variables has a minimum of one and maximum of $N!$ unique operators.
\end{lemma}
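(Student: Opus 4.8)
The plan is to read the statement as two claims about the finite set of weight vectors $\{\w_{\pi_1},\dots,\w_{\pi_{N!}}\}$ produced by the $N!$ walks, where for walk $\pi_i$ the operator is the increment vector with components $w_{\pi_i}(j) = g(A_{\pi_i(j)}) - g(A_{\pi_i(j-1)})$, and two walks yield the \emph{same} operator exactly when their weight vectors coincide (as in Example \ref{ex:chiexp}, this is the question of whether $(w_1,w_2)=(w_3,w_4)$). The two counting bounds are immediate: there are exactly $N!$ sorts, so the ChI is a selection among at most $N!$ LCSs and hence has at most $N!$ distinct operators; and since $N!\ge 1$ there is always at least one. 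The real content is that both extremes are attainable, so I would spend the proof exhibiting one measure that realizes $1$ and one that realizes $N!$, thereby showing the bounds are tight.

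For the minimum, I would take a symmetric (cardinality-based) measure $g(A)=g_{|A|}$, which by the $\text{LOS}\subset\text{ChI}$ remark is precisely an LOS. Along any walk the level-$j$ increment is $g(A_{\pi_i(j)}) - g(A_{\pi_i(j-1)}) = g_j - g_{j-1}$, which does not depend on $i$. Hence all $N!$ weight vectors collapse to the single vector $(g_1-g_0,\dots,g_N-g_{N-1})$ with $g_0=g(\emptyset)=0$, giving exactly one unique operator and establishing tightness of the lower bound.

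For the maximum, I would construct a strictly monotone, normalized FM in which every two subsets of equal cardinality receive distinct values. This is feasible because equal-cardinality sets are pairwise Hasse-incomparable, so an arbitrarily small generic perturbation of any strictly increasing measure separates the within-level values without disturbing the across-level monotonicity (alternatively, an explicit level-by-level assignment with sufficiently small gaps works). I would then argue that distinct sorts force distinct weight vectors: distinct walks $\pi_i\ne\pi_j$ are distinct maximal chains, so there is a smallest level $\ell$ at which they differ; there they share the same $(\ell-1)$-subset $C$ but have distinct $\ell$-subsets $C\cup\{x\}$ and $C\cup\{y\}$, whence the $\ell$-th increments $g(C\cup\{x\})-g(C)$ and $g(C\cup\{y\})-g(C)$ differ by the distinctness assumption. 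Thus $\w_{\pi_i}\ne\w_{\pi_j}$ for every $i\ne j$, producing $N!$ distinct operators and establishing tightness of the upper bound.

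The main obstacle is this maximum case, and within it two points need care: first, verifying that strict within-cardinality distinctness is compatible with the monotonicity and boundary/normalization constraints of a genuine FM, which the incomparability-plus-perturbation observation resolves; and second, the ``first level of disagreement'' step, which is the crux that turns distinct chains into distinct increment vectors. The two counting bounds and the symmetric-measure collapse are routine by comparison.
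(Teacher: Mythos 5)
Your proof is correct, and its skeleton matches the paper's: both establish the lemma by exhibiting one FM attaining each extreme. For the minimum, the paper takes the specific measure $g(A)=1$ for all nonempty $A$ (so every walk collapses to $\w=(1,0,\dots,0)^t$); your symmetric measure $g(A)=g_{|A|}$ is a mild generalization of the same idea. The real divergence is the maximum case. The paper's proof simply says: let each increment $w_{\pi_k}(j)=g(A_{\pi_k(j)})-g(A_{\pi_k(j-1)})$ be a unique value, ``possible for all $N$ since the domain is real-valued.'' Read literally, this is impossible for $N\ge 3$: distinct walks share Hasse-diagram edges, e.g., the walks $(1,2,3,\dots)$ and $(1,3,2,\dots)$ both begin at $\{x_1\}$ and hence have identical first increments $g(\{x_1\})$, so the full collection of $N!\times N$ increments can never be pairwise distinct. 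What the lemma actually needs is pairwise distinct weight \emph{vectors}, and your argument supplies exactly the missing mechanism: a strictly monotone FM with distinct values within each cardinality level (feasible by a small generic perturbation, since equal-cardinality sets are Hasse-incomparable and the perturbation can be kept below the across-level gaps), together with the first-level-of-disagreement step showing that two distinct maximal chains differ in their increment at the first level where they part ways. So your proof is not merely a restyling---it repairs the paper's maximum-case argument, which as written asserts an unattainable configuration and omits the chain-separation step entirely.
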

\begin{proof}
This lemma can be solved in two parts. First, let $g(A)=g(B)=g(X)=1$ and $g(\emptyset)=0$. This FM has just one underlying operator, $w(1)=(g(A_{1})-0)=(1-0)=1$, $w(2)=...=w(N)=(1-1)=0$. Second, let each $w_{\pi_k}(j)=(g(A_{\pi_k(j)}) - g(A_{\pi_k(j-1)})) \in [0,1]$ be a unique value, which is possible $\forall$ $N$ as we have a $\Re$-valued domain.
\end{proof}

Lemma \ref{lem:unique} highlights that not every FI/FM has the same number of unique underlying operators. Our goal is to identify the unique ones. Benefits of doing this include:

\begin{enumerate}
    \item \textbf{(B1) Understandability}: identifying underlying unique operators and their contexts (sorts) lets the reader know what logic (operator) is being used and when. 
    \item \textbf{(B2) Storage}: Instead of storing and indexing $2^N$ variables, a problem with fewer unique operators is more efficient, helping us scale to bigger $N$.
    \item \textbf{(B3) Data}: A model is only as good as the data used to learn it. Identifying which contexts/sorts were not encountered helps us understand when a model may fail to generalize. As such, a reader can decide to suppress outputs, identify mitigation or imputation strategies, and it helps the user understand what data needs to be collected to make a model more complete/robust.  
\end{enumerate}

\begin{example}
Consider the following example of fusing a set of decisions generated by different sensors for explosive hazard detection and humanitarian demining. Let source one ($x_1$) be an algorithm processing a \emph{ground penetrating radar} (GPR) sensor, source two ($x_2$) is an algorithm focused on \emph{infrared} (IR) imaging, and source three ($x_3$) is an algorithm processing \emph{electromagnetic induction} (EMI) signals. Hypothetically, let $g(A)=1$ for $\forall A, |A|=2$, $g(X)=1$, and $g(\{x_1\})=0$, $g(\{x_2\})=g(\{x_3\})=1$. This FI/FM has three underlying operators, $\textbf{w}_1=(0,1,0)^t$ (median) if $h_1$ is the largest number (i.e., GPR is the most confident) and $\textbf{w}_2=(1,0,0)^t$ (max) else. This logic can be describe as ``if IR or EMI is the most confident, take that as the answer.'' On the other hand, ``if GPR is the most confident, do not believe it, take the median of the three inputs''. While this is a simple example, it highlights the fact that there are advantages to understanding when (context) to use particular logic (operators). Beyond simply knowing what it is doing, it is possible that on particular domains this knowledge can help us better understand the application itself, e.g., the physics and sensing phenomonology between sensors, environments, and/or objects for explosive hazard detection.
\end{example}


\begin{algorithm}
 \caption{Naive discovery of underlying ChI LOSs} \label{alg:nivalg}
 \KwData{$g$ - Input FM }
 \KwResult{$L$ - Set of underlying LOSs }
 Initialization, $L=\emptyset$ \\
 \For{each Hasse diagram walk, $\pi_k$}{
   \If{($\textbf{w}_{\pi_k} \not\in L$) and ($\Phi(\pi_k) == 0$)}{
     $L = \{ L \cup \textbf{w}_{\pi_k} \}$
    }
 }
 Return $L$.
\end{algorithm}

\begin{algorithm}
 \caption{gSAMP: FM sampling algorithm} 
 \KwData{$g$ - Input FM}
 \KwResult{$X$ - Sample set}
 Operator dataset, $X=\emptyset$ \\
 \For{each Hasse diagram walk, $\pi_k$}{
   \If{$\Phi(\pi_k) < \epsilon$}{
      $X = \{ X \cup \textbf{w}_{\pi_k} \}$ \\
    }
 }
\label{alg:sample}
\end{algorithm}

\begin{figure}[th]
\centering
\includegraphics[width=0.9\linewidth]{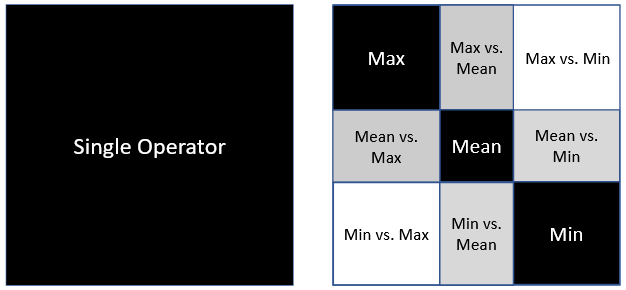}
\caption{iVAT examples. (left) A single operator, e.g., ChI for the max, and (right) a ChI with three underlying LOSs.}\label{fig:vatimg}
\end{figure}

\begin{algorithm}
 \caption{ChI operator visualization} 
 \KwData{$g$ - Input FM}
 $X$ = gSAMP($g$) \\
 $D$ = dissimilarity($X$) \\
 $\hat{D}$ = normalize(iVAT($D$)) \\
 Display $\hat{D}$.
\label{alg:ivat}
\end{algorithm}

\begin{algorithm}
 \caption{ChI operator discovery} 
 \KwData{$g$ - Input FM}
 \KwResult{$L$ - Set of underlying LOSs }
 $X$ = gSAMP($g$) \\
 $D$ = dissimilarity($X$) \\
 $(U,C) = \text{CLODD}(\text{normalize(iVAT}($D$)))$ \\
 \For{each cluster, $c \in C$}{
   Identify cluster medoid, $\textbf{x}_c$ \\
   $L = \{ L \cup \textbf{x}_c \}$ \\
 } 
\label{alg:clodd}
\end{algorithm}

A trivial way to quickly identify all underlying LOSs is Algorithm \ref{alg:nivalg}.\footnote{In Algorithm \ref{alg:nivalg}, $\Phi(\pi_k)$ is how many variables in a walk are unobserved.} However, it falls short due to factors such as:

\begin{enumerate}
    \item \textbf{Sampling Statistics and Noise}: Factors like the number of observed data points relative to the set of all intersecting paths per variable in a walk and/or noise in the data can result in variability in the underlying walks. As a result, more operators could be generated than needed, which is misleading and can lead to over fitting. 
    \item \textbf{Operator Similarity}: Consider two LOSs that vary with respect to a ``tiny'' amount. Algorithm \ref{alg:nivalg} is extreme in the respect that it will identify all unique operators. However, if we were to present these LOSs to a human, it is possible that they consider those operators to be equal. The point is, Algorithm \ref{alg:nivalg} sees the world as black and white and the problem of findings similar underlying operators can be a much more complicated process.
\end{enumerate}

Determining similarity between operators is not trivial and it can depend on factors like sampling, noise, and/or what similarity measure is used. This is the reason why we cast the operator discovery process as a sampling and clustering problem, which is discussed next.


\subsection{LCS Sampling and Clustering for LOS Discovery}\label{sec:walkandclustering}

Next, we outline clustering for aggregation operator discovery. Algorithm \ref{alg:sample} shows how a FM is converted into a dataset. Algorithm \ref{alg:ivat} subjects these samples to cluster tendency analysis to ``look'' at possible underlying aggregation operator structure (see Figure \ref{fig:vatimg}). The \emph{improved visual assessment of cluster tendency} (iVAT) \citep{5710918,Bezdek2002VATAT} is used, and in the case of ``Big N'', BigVAT \citep{HUBAND20051875}. However, if the reader prefers to directly extract clusters, then Algorithm \ref{alg:clodd} uses the \emph{clustering in ordered dissimilarity data} (CLODD) \citep{havensclodd} algorithm. We use iVAT to visualize clustering structure and CLODD for clustering, however other clustering algorithms exist and can be used, e.g., fuzzy c-means, DBSCAN, mean shift, etc. (in light of a globally best clustering algorithm).

Algorithm \ref{alg:clodd} has the following details. \emph{(Observed)}: The function $\Phi(\pi_k)$ returns what percentage of a walk is unobserved; either $\{0,1\}$ if a walk was fully observed or it can be based on what percentage of variables in a walk were observed. \emph{(Medoid)}: Herein, we compute the degree of dissimilarity of each sample in cluster $i$ to all other samples in cluster $i$, we sum these dissimilarities per sample, and the sample with the lowest value is selected; i.e., the sample that is ``most like'' all samples in the cluster. \emph{(Dissimilarity)}: The function dissimilarity$(X)$ calculates pairwise distances between samples in $X$. Examples include the $\ell_p$-norm, A-norm, Torra's Hellinger-based distance \citep{Torra2013HellingerDF}, etc. It should be noted that the weight vector components are not independent due to the summation constraint and hence are not points on a Cartesian product lattice but rather are \emph{compositional data} on the unit simplex (under closure). Computation of the mode as a medoid of such data was presented in \citep{8858850}
 and \citep{WilkinRef}.


\subsection{Dissimilarity Matrix Normalization}\label{sec:norm}
Consider the issue of how to normalize dissimilarity for iVAT visualization. This is typically achieved by scaling the values, based on the maximum observed dissimilarity, to a range of $[0,1]$ so that contrast is maximized and any internal structure is clearly visible. However, we can imagine the case where all walks in the Hasse diagram are approximately the same from a functional perspective. An iVAT image produced with the typical scaling would show complex structure, which could lead to false beliefs about the underlying LOSs present. Instead, we scale the dissimilarity matrix herein based on the maximum possible dissimilarity between two LOSs. For two LOSs scaled between $[0,1]$, the maximum dissimilarity for an $\ell_p$-norm is $2$; as the distance between the max, $a=(1,0,...,0)^t$, and min operators, $b=(0,...,0,1)^t$, is $||a-b||^2_2 = \sum_{i=1}^{N}{(a_i - b_i)^2} = \left( 1^2 + 0 + ... + (-1)^2 \right) = 2$. This results in an iVAT image that more clearly distinguishes between LOSs that are functionally approximate and those that are radically different. 

\section{Experiments and Results}\label{sec:expres}

In this section, we use a combination of controlled synthetic experiments and real-world datasets. The reason for synthetic data is we know the truth. This is critical as we look to understand the impact of \emph{factors}, e.g., number of walks observed, noise, different underlying measures, etc. While this allows us to characterize and understand the proposed methods, the real-world often provides counter cases or factors not imagined. For that reason, we include two applications. The first is the decision level fusion of a set of heterogeneous deep learners for land cover and object classification in remote sensing on community benchmark datasets. The second is regression on the community UCM machine learning datasets. This was selected to provide both a case of classification and regression.


\subsection{Synthetic: Known Operators and Noise}\label{sec:knownnoiseexp}

Figure \ref{fig:noisyfm} shows a handcrafted FM with four underlying LOSs, along with trained versions of that FM with noisy data. The intent of this experiment is to highlight that noise clearly impacts our underlying cluster structure and its subsequent identification; where \emph{noise} is defined as $C_g(\textbf{x})+\sigma$. Figure \ref{fig:noisyfm}(a) is the answer with no noise, and as the noise level increases ($\sigma=0.1$) we can clearly see---Figure \ref{fig:noisyfm}(d)---that a greater number of ``not our true'' operators arise to compensate. The result is clearly a \emph{distorted} model that (over) fits our training data. This example is for illustrative purposes, the reader can refer to integral regularization work such as ours \citep{pinar2017measures} to help combat the impact of noise on model estimation. 

\begin{figure*}
\centering
\subcaptionbox{$\sigma=0$}{\includegraphics[width=.23\linewidth]{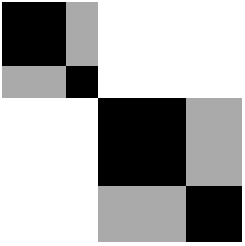}}
\hfill
\subcaptionbox{$\sigma=0.025$}{\includegraphics[width=.23\linewidth]{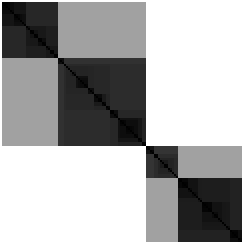}}
\hfill
\subcaptionbox{$\sigma=0.05$}{\includegraphics[width=.23\linewidth]{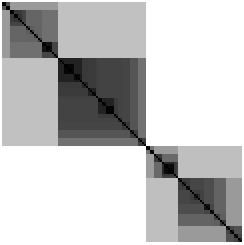}}
\hfill
\subcaptionbox{$\sigma=0.1$}{\includegraphics[width=.23\linewidth]{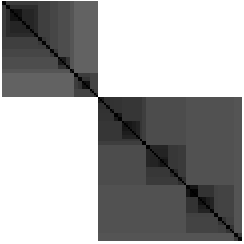}}
\caption{As a progressively noisier Gaussian distribution is added to the samples used to train a FM, the underlying LOSs diverge from the true set of four unique operators. Note that the "order" of the clusters in the iVAT imagery changes between figures. This is because the top-left cluster is always that which is most self similar. The FM used, excluding boundary cases, was $g(A_1)=0.1$ for $|A_1|=1$, $g(A_2)=0.2$ for the first four lexicographically encoded variables relative to $|A_{2}|=2$ and $g(A_3)=0.4$ otherwise for $|A_{3}|=2$, $g(A_4)$ = 0.6 for $|A_4| = 3$, and $g(A_5) = 0.6$ for the first three lexicographically encoded sets relative to $|A_{5}|=4$ and $g(A_6)=0.9$ otherwise for $|A_{6}|=4$.}\label{fig:noisyfm}
\end{figure*}


\subsection{Synthetic: Impact of Sampling on FM Learning}\label{sec:fmlearnexp}

Sampling (volume and variety) drives FM variable observability. Ideally, we would have a sample for each of the $N!$ walks, and multiple observations at that if noise is present. However, this is rarely, if ever, the case in real data. This section highlights (Figure \ref{fig:samplingxxx}) the impact of sampling.

Intuitively, we assumed that operator structure in a full FM iVAT would be poorly represented for the case of low walk visitation. The question being, what about the impact of unobserved variables? However, this is not necessarily as significant and abundant as we anticipated. In many cases the full FM iVAT is almost identical to the observed walk only iVAT image. This is the result of a few factors. For starters, assignment of unobserved FM variable values in learning depends on the underlying solver. For example, consider quadratic programming and iterative solvers such as active set methods. Variables with no data support reside in an \emph{interval of uncertainty} (IoU)---due to the FM boundary and monotonicity constraints---and their values will not be changed from their initial state unless they extend beyond the admissible max (or min, respectively) ranges dictated by the observed variables. 

This brings us to our next point, the specific underlying aggregation operator. For example, the max is one everywhere except the empty set. After a \emph{few} FM variables are observed, the IoUs are empty and the remaining variables are \emph{locked in place}. A similar story holds for the min, locking from above versus locking from below. These are two example extreme LOSs, but similar stories hold arbitrary FMs at that. More often than not, unobserved variables are assigned their IoU extreme bounds, not distorting iVAT/CLODD.  

We recommend that the reader only consider iVAT/CLODD on data supported walks (aka, that what we know). This comes at the expense of simply tracking statistics on observed walks. The reported iVAT plots for full variables versus observed ``look'' similar usually. However, although its not with a high probability, there is too great of a chance of unreliable operator interpretation and identification based on false information factors like random solver initialization.   

\begin{figure*}
\centering
\subcaptionbox{$N=5$, Randomly generated FM, all walks observed\label{fig:randomtrue}}{\includegraphics[width=.23\linewidth]{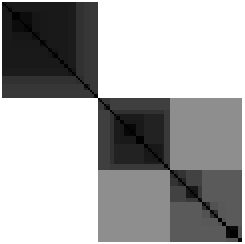}}
\hfill
\subcaptionbox{20\% walk observation}{\includegraphics[width=.23\linewidth]{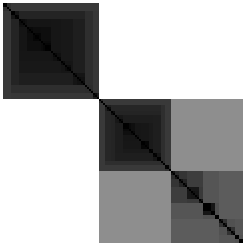}}
\hfill
\subcaptionbox{15\% walk observation }{\includegraphics[width=.23\linewidth]{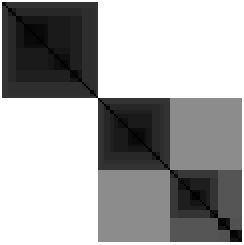}}
\hfill
\subcaptionbox{15\% walk observation, only seen walks}{\includegraphics[width=.23\linewidth]{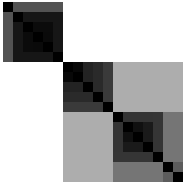}}
\caption{When training data only supports a low percentage of possible walks, we should only decompose and produce iVAT imagery with repect to observed walks, as shown in (d). However, as discussed in Section B, a surprisingly low percentage of walks are usually required to approximate and see similar structure.}\label{fig:samplingxxx}
\end{figure*}

\subsection{Fusion of a Set of Heterogeneous Deep Learners}\label{sec:remotesnesing}

In this section, we use the ChI to fuse a set of heterogeneous architecture \emph{deep convolutional neural networks} (DCNNs) for object detection and land classification in remote sensing. Herein, we fuse seven DCNNs---CaffeNet \citep{caffenet}, GoogleNet \citep{googlenet}, ResNet 50 \citep{resnet}, ResNet 101, DenseNet \citep{densenet}, InceptionResNetV2 \citep{inceptionv4}, and Xception \citep{xception}---on the AID remote sensing dataset \citep{aid}. The AID dataset contains 10,000 images of 30 different aerial scene types. The reader can refer to \citep{CGINets,enhanced_fusion_of_deep} for details about DCNN training. The focus here is not how to learn those parts, but what was learned?

First, AID has $30$ classes. In \citep{CGINets,enhanced_fusion_of_deep}, we trained a ``shared ChI'' (one FM shared across all $30$ classes) and one ChI per class. We performed $5$ fold \emph{cross validation} (CV) with respect to neural learning and $2$ fold CV. Thus, for $5$ fold neural and $2$ fold fusion, and $30$ classes, that is $300$ FMs. Instead of showing $300$ iVAT images, we summarize the trends.\footnote{Note, code is provided at https://github.com/aminb99/choquet-integral-NN and https://github.com/B-Mur/ChoquetIntegral, and the dataset is avilable at https://github.com/aminb99/remote-sensing-nn-datasets, so the reader can reproduce these results if desired.}

Overall, almost all of the learned ChIs were a single operator, the minimum. This makes a lot of sense, as the DCNNs turned out to be a set of strong learners; a scenario not the most ideal for an ensemble. The DCNNs were almost always certain---values near $0$ or $1$---and they were almost always in agreement. It makes sense that the operator learned to take a pessimistic stance. Figure \ref{fig:remotesensing}(b) shows a scaled (between min and max) iVAT image for the case of $N=7$; which has $N!=5,040$ walks. The scaling was necessary because otherwise iVAT is a black image as all LOSs were minimum operators. Occasionally, the method would learn more than one operator, shown in Figure \ref{fig:remotesensing}(a) for the case of $N=4$. The operators learned were $\textbf{w}_1=$ $(0.007,$  $0.491,$  $0.495,$  $0.006)^t$, $\textbf{w}_2=$ $(0.000,$  $0.001,$  $0.994,$  $0.006)^t$, $\textbf{w}_3=$ $(0.000,$  $0.991,$  $0.007,$  $0.001)^t$, which are (approximately) a median, trimmed min, and trimmed max, respectively. Again, this is reassuring, the network did not learn $N!$ different operators but an expected value operator and two trimmed optimistic and pessimistic operators. Last, it is common for the ChIs to observe approximately $60$ of the walks, with almost all observations equating to the sort order $\pi=(1,2,3,...,N)^t$, which is the default walk when all inputs are in agreement.

\begin{figure*}
\centering
\subcaptionbox{$N=4$ (CaffeeNet, GoogleNet, ResNet50, ResNet101), class 1 (agricultural), observed variables only}{\includegraphics[width=.42\linewidth]{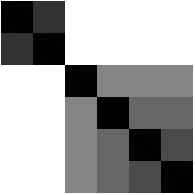}} \hspace{5mm}
\subcaptionbox{$N=7$ (all networks), all FM variables, and iVAT scaled between min and max distances}{\includegraphics[width=.42\linewidth]{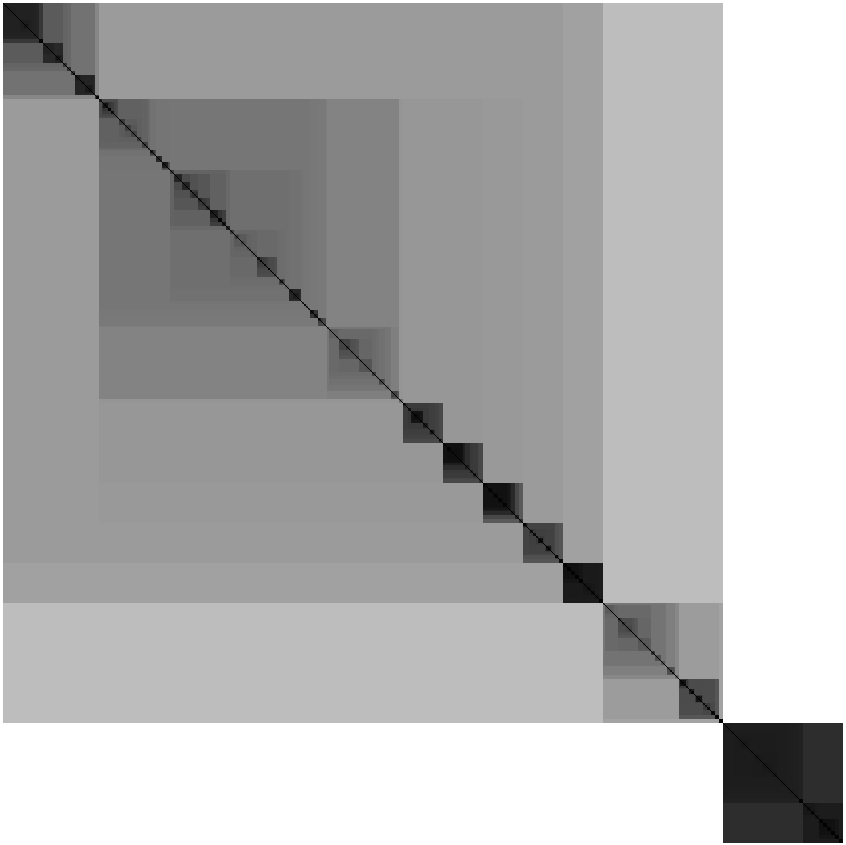}}
\caption{Results on the remote sensing benchmark AID dataset.}\label{fig:remotesensing}
\end{figure*}

\subsection{UCI Machine Learning Repository and Regression}\label{sec:UCI}

In \citep{siva3}, we presented an online learning algorithm for \emph{ChI}-based \emph{regression} (CIR) that scales with $N$. Additionally, $\ell_p$ regularization was used to balance accuracy relative to model complexity. In this section, we use our decomposition procedure to extract explanations from the learned models. This is interesting on two fronts. First, it shows decomposition relative to a different application, regression versus decision level fusion. Second, it shows the impact of regularization on decomposition. The reader can refer to \citep{siva3} for full details on mathematics, learning algorithms, regularization, and statistical performance analysis on community benchmark datasets.


First, our CIR learning algorithm relaxed the montonicity and boundary requirements; which makes it different from the measures shown so far in this article. As a result, the extracted LOSs form tighter clusters. For example, the Yacht dataset without regularization (Figure \ref{fig:yacht}(a)) learned three operators: $\textbf{w}_1 = (-54.6,$ $54.89,$ $-0.24,$ $-0.33,$ $0.54,$ $-0.11)^t,$ $\textbf{w}_2 =$ $(0.05,$ $2.35,$ $-2.35,$ $3.4,$ $-2.35,$ $0.13)^t,$ $\textbf{w}_3 =$ $(-54.6,$ $54.65,$ $-2.59,$ $3.69,$ $-1.12,$ $0.1)^t$. From an efficiency perspective, this tells us that we can compress the $2^N$ variables to $N \times 3$. The Yacht dataset has six features, meaning we only need to keep $18$ versus $64$ weights. Clearly, the savings ratio increases with respect to $N$. For $N=15$ features, three operators would be $45$ versus $32,767$ coefficients; less than one percent ($0.1\%$) of the number of variables. 

In the case of fusion, we discussed LOSs relative to aggregation operators, e.g., max-like, min-like, etc. However, Figure \ref{fig:yacht}(a) is regression. The weights need to be interpreted differently. The bias is what CIR produces for zero-valued inputs. Each weight in $\textbf{w}$ can be thought about in terms of positive and negative correlation. For example, $\textbf{w}_1(1)=-54.6$ is large and negative. Thus, as input one increases, our predicted value decreases (and vice versa). Conversely, $\textbf{w}_1(5)=0.54$ is small(er) and positive input increases are associated with increasing predicted values. On a side note, in \citep{siva3} we normalized each feature to zero mean with unit variance. If this was not the case, then the reader will need to keep in mind feature scaling differences relative to differences in coefficient weights.     

\begin{figure*}
\centering
\subcaptionbox{Full FM, No Reg}{\includegraphics[width=.23\linewidth]{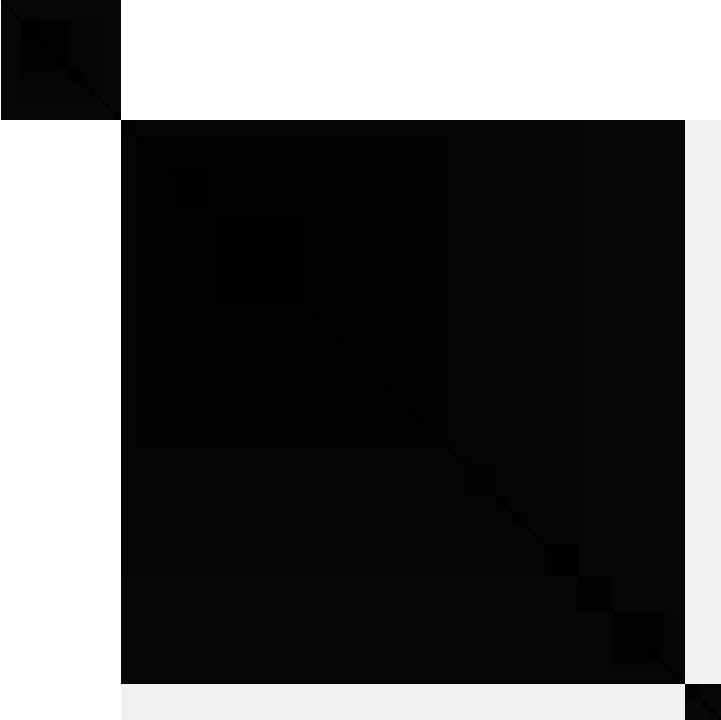}}
\hfill
\subcaptionbox{Full FM, Reg}{\includegraphics[width=.23\linewidth]{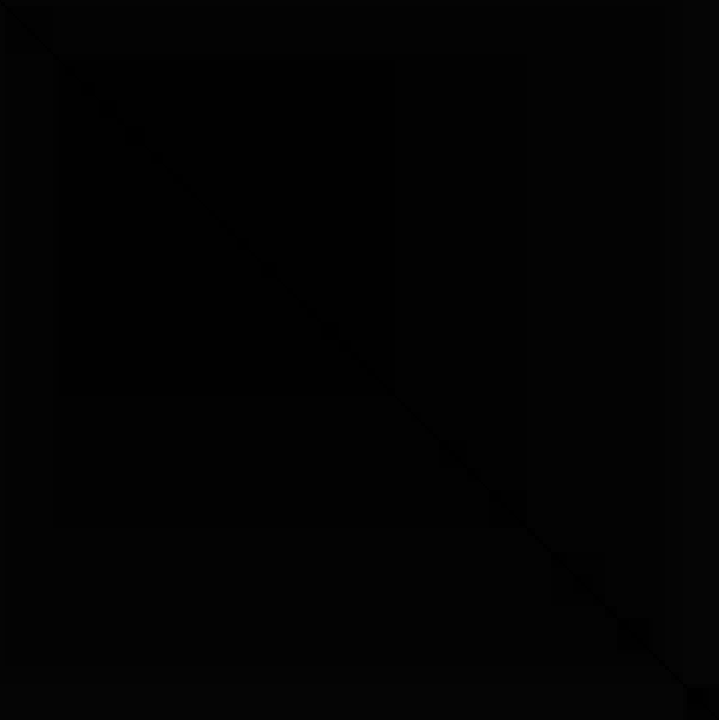}}
\hfill
\subcaptionbox{Walk Subset, No Reg}{\includegraphics[width=.23\linewidth]{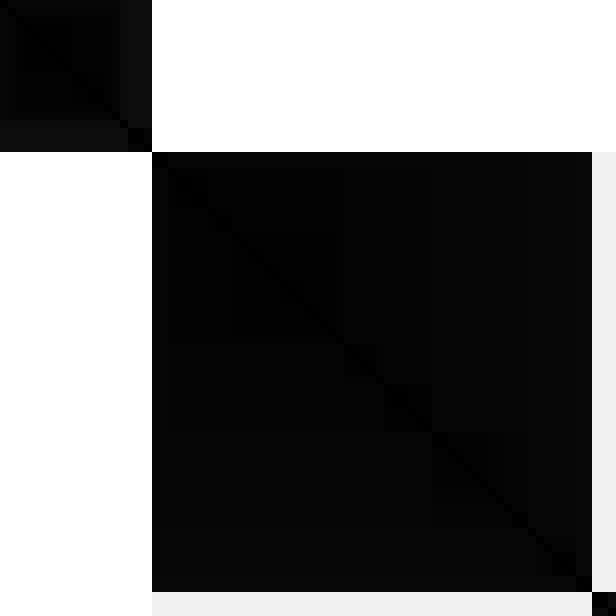}}
\hfill
\subcaptionbox{Walk Subset, Reg}{\includegraphics[width=.23\linewidth]{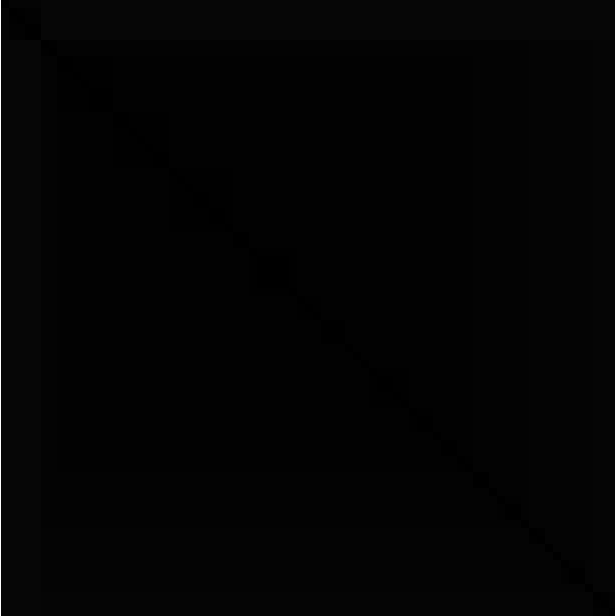}}
\caption{Yacht UCI regression dataset. A total of $10\%$ of the walks were observed. ((a),(b)) and ((c),(d)) are normalized to the same display range for visual display and comparative analysis.}\label{fig:yacht}
\end{figure*}


\begin{figure*}
\centering
\subcaptionbox{Full FM, No Reg}{\includegraphics[width=.23\linewidth]{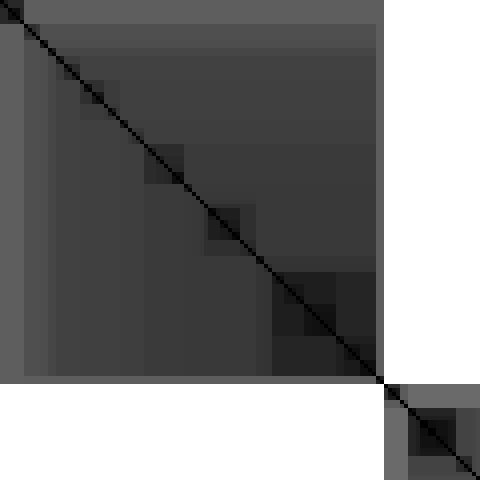}}
\hfill
\subcaptionbox{Full FM, Reg}{\includegraphics[width=.23\linewidth]{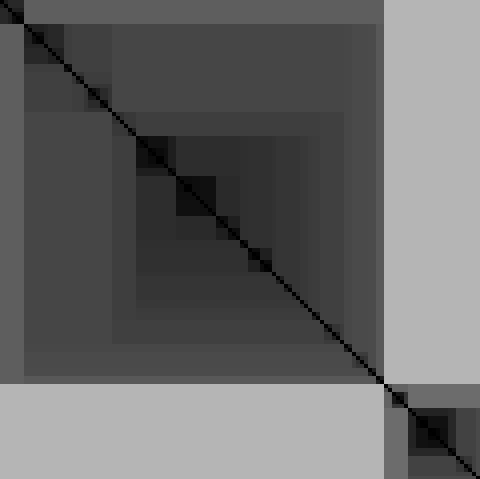}}
\hfill
\subcaptionbox{Walk Subset, No Reg}{\includegraphics[width=.23\linewidth]{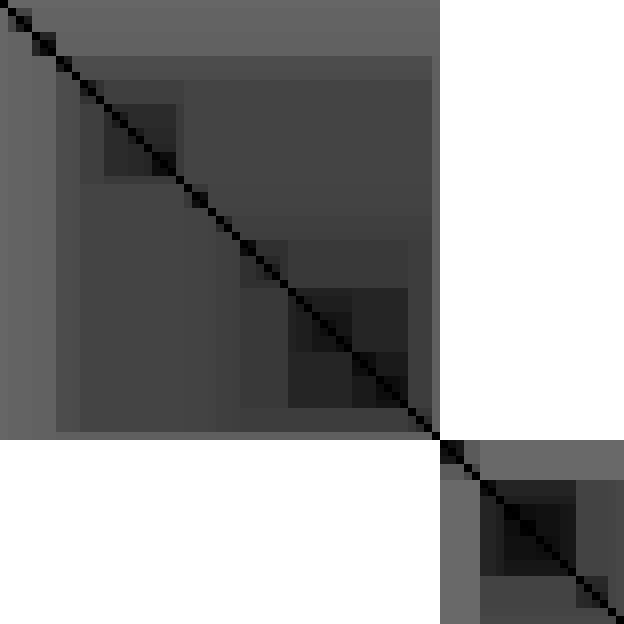}}
\hfill
\subcaptionbox{Walk Subset, Reg}{\includegraphics[width=.23\linewidth]{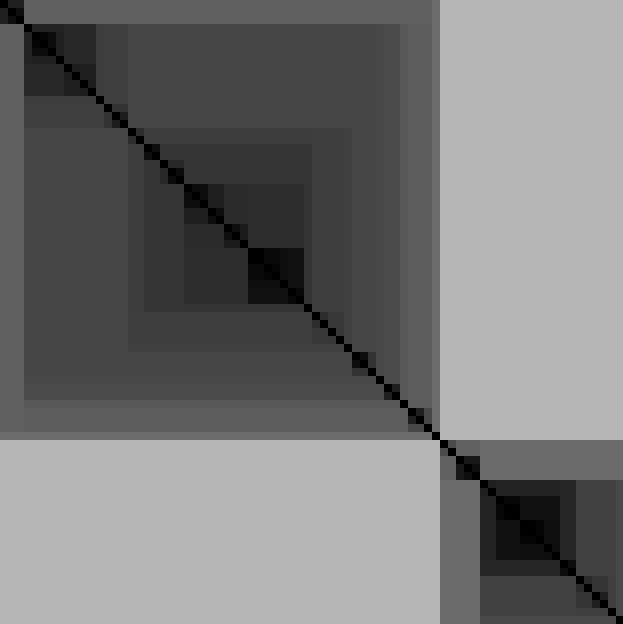}}
\caption{Airfoil UCI regression dataset. A total of only $65\%$ of the walks were observed. ((a),(b)) and ((c),(d)) are scaled to the same minimum and maximum for visual display and comparative analysis.}\label{fig:airfoil}
\end{figure*}


Next, Figure \ref{fig:yacht}(c) shows that our procedure identifies the same cluster structure for all walks, versus just the subset of observed walks. This follows our synthetic experiment discourse in Section \ref{sec:fmlearnexp}. The most interesting---yet predictable---discovery is the impact of regularization. The regularization strategy in \citep{siva3} imposes an $\ell_p$ penalty on the magnitude of FM variables, which promotes sparsity in the FM variables. In many cases, e.g., Figure \ref{fig:airfoil}, the impact of regularization is subtle, if any. However, in other cases, e.g., Figure \ref{fig:yacht}(b) and \ref{fig:yacht}(d), we see that regularization leads to a different and simpler solution; e.g., one underlying equation versus three in the case of Figure \ref{fig:yacht}. We do note that the vast majority of solutions that were learned in our experiments had little-to-no iVAT difference. We cherry picked a few examples to show the reader that it is possible to encounter. Logically, the decomposition's can be different because the cost function has two parts, one component that minimizes functional error relative to training data and a second regularization component that tries to simplify our model to realize a more generalizable solution. Thus, the two error function terms are competitive, not complementary, which can pull their solutions apart.

In summary, the proposed decomposition procedure is valid for regression, the story telling changes with respect to regression coefficient versus decision fusion, and regularization can alter the LOS substructure.\footnote{Note, the way in which regularization impacts LOS substructure is complex and warrants future work. That is, it depends on the \emph{type} of regularization used, see \citep{regintegral}, the underlying ground truth, and the combination thereof.}  

\section{Summary and Future Work}\label{sec:sumfuture}

Herein, we put forth a decomposition procedure for the \emph{fuzzy integral} (FI), specifically the \emph{Choquet integral} (ChI), into a set of underlying \emph{linear order statistics} (LOS). Our procedure involves sampling walks---which are \emph{linear convex sums} (LCSs)---from the \emph{fuzzy measure} (FM), followed by operator similarity and then clustering for LOS discovery. Motivating reasons for seeking a decomposition include tractability (extending the ChI to ``Big N''), acquisition (how do we learn a FI/FM), model uncertainty (and ultimately model generalizability), and understandability (glass versus black box solutions). Our experiments span synthetic to real world and they reveal the behavior of ChI learning relative to noise, random versus structured measures, and sampling statistics (FM variable observation). Real world experiments demonstrated the utility of our methods on decision level fusion for remote sensing and regression on UCI machine learning datasets.

Specifically, we achieved the following with respect to the four challenges (C1-C4) identified in Section \ref{sec:intro}. A direct contribution was made with respect to C4 (understandability), C3 (uncertainty), and C1 (tractability) through the process of contextual operator discovery. However, C2 (acquisition) was not directly advanced herein, e.g., a new learning algorithm to target robust learning of the identified sub-components and/or ways to transfer learned components to unlearned measure structure. In future work, we will strive to combine all four of these challenges into one robust, interpretable, explainable, and actionable solution.    

In future work, we will explore ways to simultaneously learn the decomposition in conjunction with robust statistics of those operators. We will also explore new ways to measure operator similarity, for LOSs and the FM/FI, as existing functions capture syntactic validity but sometimes fall short of capturing semantic similarity. Furthermore, our research highlights the gaps in the FM and FI. New research is needed to intelligently determine how to operate under such conditions, e.g., suppress decision making, interact with a human, transfer learn from past models \citep{8858844} and/or from similar structure in the same model (e.g., imputation), etc. Last, our work identifies observed LCSs and LOSs therefrom. However, even if some walks are not observed, some of their variables have been observed, to some degree by other walks. Last, we showed how to identify fewer LOSs. However, we still have to record which sort orders map to what operators. We need an efficient way to store and index into this reduced operator set.

\bibliographystyle{IEEEtran}
\bibliography{MyBib}

\end{document}